\newcommand{\ph}{\varphi}
\newcommand{\bb}{\mathbb}
\newcolumntype{L}{>{\arraybackslash}m{7cm}}
\newcommand{\fon}[1]{\fontfamily{#1}\selectfont} 
\newtheorem{statement}{Statement}
\newtheorem{theorem}{Theorem}
\begin{document}
	\title{Theory of Machine Learning  with  Limited Data}
	
	\author{Marina Sapir} 

\affil{MetaPattern}
	\date{}
\maketitle

\begin{abstract}
	
	  Application of machine learning may be understood as  deriving  new knowledge for practical use  through  explaining accumulated observations, training set.   Peirce used the term  \textbf{abduction} for this kind of inference. Here I formalize the concept of abduction for real valued hypotheses, and show that 14 of the most popular textbook ML learners (every learner I tested),  covering classification, regression and clustering,  implement this concept of abduction inference. 
	The approach is proposed as an alternative to   Statistical learning theory, which requires an  impractical assumption of indefinitely increasing training set for its justification. 
\end{abstract}

\section*{Introduction}



The only commonly accepted theory of machine learning (ML) is statistical. It is not declared explicitly, but all results in this theory  implicitly assume   that (1) we are learning a dependence generated by a fixed probability distribution and   (2) the training set may be increased indefinitely so that  ``eventually'' this distribution will be well represented by the training set. 

For most of practical cases these assumptions are a stretch.  Applied ML is about learning a nondeterministic dependence   from a given finite sample for some urgent decision making.  Limited sample does not allow one to assume existence of  probabilities or even existence of infinite populations.  Essentially, applied ML and theoretical ML solve different problems.  As a result, theoreticians cannot answer the questions practitioners ask \cite{Papaya}.

Instead I propose to apply Peirce's 
\cite{Pierce} pragmatic view on learning from experimental data.  Within this paradigm, ML is a   search for the best  explanation of  the observations. This search is logically understood by Peirce as abduction inference.  In artificial intelligence, problems of diagnostics were already considered to be an example of abduction \cite{AbductionComplexity}.  

Here I   adapt the concept of abduction to deal with real-valued   hypotheses    and  formulate the concept of a abduction learner.  I  conjecture that every (worthy) learner in ML is abduction learner. Fourteen   popular textbook learners for classification, regression, clustering are shown to support the conjecture.

To the best of my knowledge, the proposed approach  is  the first one   to explain  and logically justify  large variety of  existing learners from a single point of view.  Pragmatic understanding of applied ML  opens a new path to solving  ``how to'' questions practitioners ask,  to design new useful learners for real life problems. 

Here is a brief description of each section: 
\begin{enumerate}
	\item  Traditional views on ML: learning to predict, statistical learning. 
	\item Pragmatic view on learning, informal description.  
	\item Logic of data explanations: alignments and deviations.
	\item Logic of data explanations: aggregation of deviations
	\item Logic of data explanations: recursive aggregation.
	\item  Explanation criteria 
	\item Abduction learning procedure. The Main Conjecture. 
	\item Proof that   the most popular learners are abduction learners. 
	\item Conclusions. 
\end{enumerate}

 \section{Traditional  views on ML} 
 
 Here I describe two understandings of  ML: traditional one and  one of statistical learning theory.  
 
 \subsection{Givens}
 
 Denote $\Omega$ the set of real life objects of interest.  For example, this  may be patients with skin cancer, or bank clients or  engine failures. There is a  hidden  essential quality we would like to find out (may be, a diagnosis or prognosis).
 Some properties (\textbf{features}) of the objects  $\Omega$   can be always evaluated and numerically expressed. Some of them are expected to be relevant to the hidden property.  Suppose, there are $n$ such features.  Denote $X \subseteq R^n$ domain of  feature vectors for objects in $\Omega$.   The hidden essential quality also has numerical expression values in  $Y \in R.$ The value of the hidden essence in a given object  is called ``\textbf{feedback}''.  We assume there is an ``\textbf{underlying dependence}''  $\varphi: X \rightarrow Y$ between feature vectors  and the feedback. Yet, we can not assume that the dependence is deterministic.  
 
 For example,  the features may not define completely  the feedback we are trying to model, there is uncertainty in measurements, random mis-classification and so on. Objects with the same features may have different feedback, and the same object evaluated twice may have different features or even feedback.  
 
 This is not a bad luck, but a inevitability.  Indeed, ML is needed only when there is no exact theory explaining the phenomenon we are trying to predict.  Therefore, we do not know what it depends on. The measurements have intrinsic uncertainty. 
 
 The information about the underlying dependence $\ph$  is given as training set:  observations about values of  feedback  in certain data points 
 set of tuples $\{\langle   x, y \rangle\}.$  These tuples will be also called empirical instances. 
 
  The sooner we find the proper hypothesis about the dependence, the better for the decision making. So,  data shortage is not a bug, it is a feature of ML.

\subsection{Prediction problem} 

The goal is assumed to be prediction of future values of the nondeterministic dependence.

  For example, here is how the prediction problem is understood in \cite{Stability}: Given a training set $S$ and data point $x$  of a new observation  $\langle x,  ?  \rangle$ predict its feedback $y.$
  
  The main issue with this problem statement is that to evaluate the decision and to select between the hypotheses we need to know what is not given: the future. 

There is no way to solve this problem with the available data. 


\subsection{Statistical Learning Theory Approach}

Statistical Learning  (SL) theory is  the only commonly accepted theoretical approach to ML. 



	This is how the  proponents of the SL theory   understand the problem: 	``Intuitively, it seems reasonable to request that a learning algorithm, when presented more and more training examples, should eventually  ``converge” to an optimal solution." \cite{StatTheory} 
The ``optimal solution'' here is the hypothesis having  the ER loss criterion close to minimal for the given class of functions regardless of the distribution.

 It does not appear to be intuitive to solve the problem where more and more training examples are expected, if we have only one finite training set.
V.  Vapnik \cite{VapnikBook} formulated the justification of the statistical approach  in the most direct way 

\begin{quotation}
	\textit{	Why do we need an asymptotic theory $\langle \cdots \rangle$ if the goal is to construct algorithms from a limited number of observations?
		The answer is as follows:
		To construct any theory one has to use some concepts in terms of which the theory is developed $\langle \cdots \rangle.$ }
\end{quotation}

In other words, statistical learning theory  assumes  indefinite increase of the training set, so that the statistical approach can prove some results. Statistics  has laws of large numbers, so the problem has to be about ever increasing training sets and  convergence. 


The  apparatus of probability theory does not help to understand  and solve the true pragmatic problem with fixed finite data, finite time allocated for decision making and un-quantifiable  uncertainty.

\section{Pragmatic view on learning} 

First,  I address the common misconception  that ML is induction.  


\subsection{ML is not induction, but abduction }

It is a common belief that ML is an induction inference. But is it? 

The dictionary says that induction is a method of reasoning from a part to a whole, from particulars to generals, or from  an individual to universal.  Roughly,  induction extends a property of a part  on  the whole.  We start with  objects of a certain class and their known common  property,  then  we conclude that all objects of this class have the same property.  This  procedure is exactly opposite to deduction, when knowing a property of a class and one object of the class, we infer the same property of the object. For both these types of inference, the   common property of objects of interest (the hypothesis),  is given in the beginning. 
 
This is not how we learn. We start with objects (observations) and no hypothesis, The property - the pattern  - which unites objects is not known. It is what needs to be found. 
  
Guessing the hypothesis to explain the facts is what Peirce called abduction inference \cite{PeirceV1}. 

\subsection{Informal description of pragmatic learning problem }
From pragmatic point of view, the goal of learning  is to find the best explanation of observations, not prediction. Let $h$ be a function used to explain observations. I call it explanation hypothesis, or simply explanation. Notation $S$ is used for training set, $H(h)$ set of all hypothetical instances of the function $h$. We will be interested in the conglomerate  of instances $M(h, S) =  \bigcup \{S, H(h)\}. $ 

Informally, we can formulate the \textbf{explanation principle}:
\begin{quote} \label{informal} 
	\textit{For an explanation $h$ to be any good, the values of feedback on instances  in $M(h, S)$ with close data points  shall be close. }
\end{quote}

Closeness of feedback on close data points in the conglomerate $M(h, S)$ can be used to evaluate  an explanation quality.   

The explanation principle makes it clear that a nondeterministic dependence has to have mostly close feedback on close data points to be ``explainable''. And to be a good explanation for an explainable dependence, a hypothesis has to have a close feedback on close data points as well.

\section{Data Explanation Logic}

Now,  I concentrate on developing a formalism for the abduction  criterion 


%
%

\subsection{Language of  Alignment }

So far, I considered only  an underlying dependence with a single variable. In a general case  the restriction is not necessary.  For example, using two independent variables may be convenient for formalizing ranking problem. 

The first order signature has these  6 + $n$   sorts among others:

\begin{table}[H]
	\begin{center}
		\caption{Sorts of LA}
		\label{tab:table1}
		\begin{tabular}{| l | l| l | l|}
			\hline
			Sort & Content & Variables & Constants \\ \hline
			 $\bb{N}$  & Natural numbers &  $ i, i, k, l, m, i_1, \ldots $ & $n, 0, 1$\\
			 $  \bb{X}_1, \ldots,  \bb{X}_n$ & Domains of the independent variables  & $x, x_1, x_i^j,  \ldots$ &\\
			 $\bb{Y}$ & Domain of the feedback   & $y, y_1,  \ldots$ & \\ 
			 $\bb{H}$ &  Types of observations & $s, s_1, \ldots$  & $\asymp,  \asymp_1,  \ldots$\\ 
			 $\bb{O}$ &  Types of hypothetical instances & $s, s_1, \ldots$ & $\approx, \approx_1, \ldots$\\ 
			$\Psi$ & Instances  & $ \alpha, \beta, \alpha_1, \beta_1, \ldots$  & \\ 
		   $\bb{R}$ & Real numbers & $r, r_1, r_2\ldots$ & \\

			\hline
		\end{tabular}
	\end{center}
\end{table}

Let us notice that observations may be of different types. It may be convenient for identification of censored data, for example. The set $\bb{O}$ combines symbols for all types of observations.

Some other sorts will be introduced later, when we need them to describe aggregation  and regularization.

Usually,  domains of the sorts $ \bb{X}_1, \ldots,  \bb{X}_n$  and  $\bb{Y}$ are some metric spaces. However,  the triangle axiom  of a distance on a domain is irrelevant,  not required.  

%
%
%
%

The first order symbols are defined in the Table \ref{tab:Functions}.

\begin{table}[h!] 
	\begin{center}
		\caption{Function on instances}
		\label{tab:Functions}
		
		\begin{tabular}{|l | l | l | l | l |  } 
			\hline
			& Symbol   & Arity & Sorts  & Semantic   \\ \hline
			1 & $\bm{x}$  & 2  & $\Psi  \times \bb{N} \rightarrow \bb{X}_i$ &  $\bm{x}(\alpha, i)$ is $i$-th     variable  of $\alpha \in \Psi$ \\
			2 & $\bm{y}$  & 1  & $\Psi  \rightarrow \bb{Y}$ &  $\bm{y}(\alpha)$ is feedback   of $\alpha$ \\
			3 &  $\bm{s}$ & 1  &  $\Psi \rightarrow \bb{H} \cup \bb{O}$  & $\bm{s}(\alpha) $  is type symbol of $\alpha$ \\
			4 & $\rho_x$ & 3  & $\Psi \times\Psi  \times \bb{N} \rightarrow \bb{R}$ &  $\rho_x(\alpha_1, \alpha_2, i)$ =  $\| \bm{x}(\alpha_1, i)  -  \bm{x}(\alpha_2, i) \|$\\
			5 & $\rho_y$ & 2  & $ \Psi  \times \Psi   \rightarrow \bb{R}$ & $ \rho_y(\alpha_1, \alpha_2) = \|\bm{y}(\alpha_1) - \bm{y}(\alpha_2) \|$  \\
			6& $\bm{o}$ & 1  & $ \Psi     \rightarrow  \{0,1\}$ & $  (\bm{o}(\alpha) =1 ) \leftrightarrow  (\bm{s}(\alpha) \in \bb{O}) $ \\
					7& $\bm{h}$ & 1  & $ \Psi     \rightarrow  \{0,1\}$ & $ (\bm{h}(\alpha) =1 ) \leftrightarrow  (\bm{s}(\alpha) \in \bb{H}) $ \\
	
			\hline
		\end{tabular} 
	\end{center}
\end{table}
\bigskip
\hskip1pt

 The function $\bm{o}$ distinguishes observations from the hypothetical instances. It may be useful, when there are many types of observations.

When there is only one independent variable ($n= 1$), I will skip the index variable in $\rho_x$ and $\bm{x}$.

\subsection{Alignments and Deviations}

The  formulas are formed as in first order predicate logic. 

\begin{enumerate}
	\item 	Any first order predicate  $\pi(\alpha, \beta, i): \alpha, \beta \in \Psi, 1 \le i \le n$ is an \textbf{alignment} if 
	\begin{multline*}
	 \forall \alpha  \; \forall \beta  \; \forall \alpha_1  \; \forall \alpha_2  \\
		\Big( \pi(\alpha, \beta)  \&  \big( s(\alpha) = s(\alpha_1) \big) \& 
		\big( s(\beta) = s(\beta_1) \big) \& \big( \rho_x(\alpha_1, \beta_1, i) \leq 
		\rho_x(\alpha,  \beta, i) \big) \Big) \\ \Rightarrow \pi(\alpha_1, \beta_1, i). 
	\end{multline*}



	\item For a pair of first order formulas $\alpha_1, \alpha_2$  and a natural $i, \; 1 \le i \le n$  their \textbf{ deviation} $\delta(\alpha_1, \alpha_2, i)$ is defined as 
	$$ \delta(\alpha_1, \alpha_2, i) = t(\rho_x(\alpha_1, \alpha_2, i), \rho_y(\alpha_1, \alpha_2)), $$
	where $t: \bb{R}^{\ge 0} \times \bb{R}^{\ge 0} \rightarrow \bb{R}^{\ge 0} $  is antitone  by $r_1$ and  isotone  by $r_2.$ 
\end{enumerate} 


Let us consider an example. The simplest alignment  for the case when $n = 1$ is called \textit{point-wise:}
\begin{equation} \label{pwa}
\pi_{pw}(\alpha_1, \alpha_2) = (  \bm{s}(\alpha_1)  = \approx) \; \&  \;(\bm{s}(\alpha_2) = \asymp)  \; \& \;  ( \bm{x}(\alpha_1) = \bm{x}(\alpha_2 ) ).	
\end{equation}

The simplest deviation is 
\begin{equation}\label{pwd}
	\delta(\alpha_1, \alpha_2) =  \rho_y(\alpha_1, \alpha_2) = \|\bm{y}(\alpha_1) - \bm{y}(\alpha_2)\|.
\end{equation}

\section{Aggregation of deviations} 

To compare explanations, one has to compare  collections of deviations. Usually, it is done by mapping such a collection into a single  number for a comparison. 

Let us assume for simplicity that deviations corresponding to the same alignment are ordered one way or another.  

Denote $\bb{Z}$ an additional  domain of finite sequences of real numbers.  The elements are interpreted as sequences of deviation  for aggregation and will be denoted by capital letters $A, B, G.$  The elements of  sequences will be denoted  with corresponding small letters with indices.

Let $A^\prime$ be an operation of ordering elements of the sequence $A$ from smallest to largest. 

Here are the binary relations on this domain. 
\begin{table}[h!]
	\begin{center}
		\caption{Binary Relations}		
		\begin{tabular}{|l | l | l |  }
			\hline
			& Symbol      & Semantic   \\ \hline
1 & $A = B $ &   $ \forall i \;  a_i = b_i $\\
2 &$ A \sim B$ & $  A^\prime = B^\prime$\\
3 & $A < B  $ &   $(  \|A\| = \|B\| ) \&  \Big( \exists q \; \forall i \; \exists j \; \big( b_i = q(a_j) \big)  \& \big( b_i >  a_j \big)\; \Big).  $\\
4 & $A \le B  $ &   $(  \|A\| = \|B\| ) \&  \Big( \exists q \; \forall i \; \exists j \; \big( b_i = q(a_j) \big)  \& \big( b_i \ge a_j \big)\; \Big).  $\\ \hline
\end{tabular}
\end{center}
\end{table}

We will use three functions
\begin{enumerate}
	\item $\|A\|,$  length of $A$
	\item $\{A, b\}, $ adding number $b$ in the end of the sequence $A.$
	\item $g(A, i) =  $ $a_i.$
\end{enumerate}

An operation $\Omega:$  $\bb{Z} \rightarrow \bb{R}$will  be called \textbf{aggregation} if it satisfies the next 
 \textbf{axioms}:
\begin{enumerate}  \label{AggAxioms}
	\item \emph{\textbf{Order insensitivity:}} For any $A, B$ if $A \sim B$ then $\Omega(A) = \Omega(B)$
	\item \emph{\textbf{Monotony: }} For any $A, B$
	$$A \le B \Rightarrow \Omega(A) \le \Omega(B) \; \&$$
	$$A <  B \Rightarrow \Omega(A) < \Omega(B) $$
\end{enumerate}

We will call aggregation \textbf{stable}, if the next property is also satisfied:   
$$ \forall A \left( \underset{  i \le \|A\|} {\forall i}  \;  a_i = a_1 \right)  \Rightarrow \; \big(\Omega(A) = a_1\big). $$

\begin{statement} \label{State1}
	For any stable aggregation $\Omega(A)$  
	$min(A) \leq \Omega(A) \leq max(A).$
\end{statement}
\begin{proof}
Suppose $\exists A \; \Omega(A) > max(A).$
 If every element of the  sequence $B$ is $ \Omega(A)$ then $B > A.$  By the Monotony, $\Omega(B) > \Omega(A).$  This contradicts stability.
  The proof for the $minimum$ is similar. 
\end{proof}

\begin{statement}\label{State2} If $A < B$ then  $A^\prime < B^\prime$ and for any $i \;\; b^\prime_i > a^\prime_i$. 
	\end{statement}
\begin{proof}
	 It is obvious that If $A < B$ then  $A^\prime < B^\prime.$ 
	  $$\exists q \; \forall j \; \exists i \; \big( b^\prime_j = q(a^\prime_i) \big)  \&   \big( b^\prime_j > a^\prime_i \big).$$ 
	 
	Take any $i$ and denote $b^\prime_j = q(a^\prime_i)$. 
	Then $b^\prime_i > a^\prime_i$ in all three possible cases: 
	\begin{enumerate}
	\item If $i = j,$ it is obvious. 
	
	\item If $i > j$, $a^\prime_i < b^\prime_j  \le  b^\prime_i.$ So, $a^\prime_i  <  b^\prime_i.$
	
	\item If $i < j$, there exists $k$ such that $k  >  i$ and $q(a^\prime_k) = b_l, \; l  <  i.$ Then $a^\prime_i \le a_k < b^\prime_l \le b^\prime_i. $ 
	\end{enumerate} 
\end{proof}

Here is an example of an aggregation operation. 
 \begin{theorem}\label{PercentileTheorem}
Any percentile  is a stable  aggregation.
\end{theorem}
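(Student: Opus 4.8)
The plan is to check the three defining conditions of a stable aggregation---order insensitivity, monotony, and stability---directly, exploiting that a percentile depends on its input only through the sorted sequence and its length.

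First I would fix the convention that the $p$-th percentile of a sequence $A$ of length $m = \|A\|$ is the order statistic $a^\prime_{k}$, where the index $k = k(p,m)$ is determined solely by $p$ and $m$ (the argument is unchanged if one instead adopts an interpolating convention, i.e. a fixed convex combination $(1-\lambda)a^\prime_{k} + \lambda a^\prime_{k+1}$ with $\lambda$ and $k$ depending only on $p,m$). The key structural fact is thus that the percentile is a function of $A^\prime$ alone, with the selected position fixed once $\|A\|$ is fixed.

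Order insensitivity and stability are then immediate. If $A \sim B$, then by definition $A^\prime = B^\prime$; since the percentile reads off an entry of the sorted sequence, $\Omega(A) = a^\prime_k = b^\prime_k = \Omega(B)$. For stability, if every $a_i = a_1$, then $A^\prime = (a_1,\ldots,a_1)$, so every order statistic---and hence any fixed convex combination of them---equals $a_1$, giving $\Omega(A) = a_1$.

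The substance of the proof is monotony, and here I would lean on Statement~\ref{State2}. For the strict case, $A < B$ gives, by Statement~\ref{State2}, that $b^\prime_i > a^\prime_i$ for every $i$. Because both $<$ and $\le$ require $\|A\| = \|B\|$, the two sorted sequences share the same length $m$, so the percentile selects the same position $k = k(p,m)$ in each; hence $\Omega(B) = b^\prime_k > a^\prime_k = \Omega(A)$ (and a convex combination of strictly larger entries is strictly larger). For the non-strict case I first need the analogue of Statement~\ref{State2} for $\le$, namely that $A \le B$ implies $b^\prime_i \ge a^\prime_i$ for all $i$; I expect the same three-case argument used in the proof of Statement~\ref{State2} to go through verbatim with every strict inequality relaxed to $\ge$. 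Granting this, $\Omega(B) = b^\prime_k \ge a^\prime_k = \Omega(A)$.

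The main obstacle I anticipate is precisely this non-strict companion of Statement~\ref{State2}: the text proves only the strict version, so I must either reprove the $\le$ form or cite it as an easy variant. The remaining care is purely definitional---pinning down the percentile convention so that the selected index depends only on $(p,m)$---which is what legitimizes both the ``same position $k$'' step in monotony and the reduction to $A^\prime$ in order insensitivity.
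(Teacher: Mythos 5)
Your proof is correct and follows essentially the same route as the paper's: order insensitivity and stability are read off from the fact that a percentile depends only on the sorted sequence, and strict monotony comes from Statement~\ref{State2} applied at the position (or fixed interpolation) determined by $(p,m)$, exactly as the paper does with its integer/non-integer case split. You are in fact slightly more thorough than the paper, which verifies only the strict half of monotony; your point that the non-strict half requires a $\le$-analogue of Statement~\ref{State2}, obtainable by relaxing every strict inequality in its three-case argument, addresses a step the paper passes over in silence.
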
 
\begin{proof}
	
	Denote $P_r(A)$ $r$-th percentile of $A \in \bb{Z}.$
Order 	insensitivity  is obvious, because percentile does not take into account order of elements.
	
Let us prove monotony. Let $A < B.$ From the Statement \ref{State2} it follows that $A^\prime < B^\prime$ and for any $i \;\; a^\prime_i < b^\prime_i.$

Denote $m = \|A\| = \|B\|.$ If $p= (r/100  \cdot m) $ is integer,  then $\mu(A) = a_{p + 1},$  $\mu(B) = b_{p+1},$  and $\mu(A) < \mu(B). $
If $p$ is not integer, denote $q = \lfloor p \rfloor,$   $ \mu(A) = ( a_q + a_ {q+1})/2 ,$  $\mu(B)  = ( b_q + b_ {q+1})/2 $ and $\mu(A) < \mu(B). $

Stability is trivial.. 
\end{proof}

 \section{Recursive aggregation} 
 
 Here I introduce a language to generate a class of aggregation functions called recursive aggregation.

\subsection{Language of recursive aggregation}


We will use additional functions

%

\begin{table}[h!]
	\begin{center}
		\caption{Function symbols}
		
		\begin{tabular}{|l | l | l | l | l |  }
			\hline
			& Symbol   & Arity & Sorts  of arguments  & Semantic   \\ \hline
			1 & [ ]& 1   & $\bb{R}  \rightarrow \bb{R} $ & scaling    \\
	
			3 &$ \oplus$ & 2  & $\bb{R} \times \bb{R} \rightarrow \bb{R}$ &  compounding \\
			4 & $\sum$  & 2  &  $\bb{G} \times \bb{N}\rightarrow \bb{R}$  &  recursive aggregation  \\
		    6 & $\eta$ & 2& $ \bb{R}  \times \bb{N} \rightarrow \bb{R}$ &  normalization   \\
			\hline			
		\end{tabular} 
	\end{center}
\end{table}

\subsection{Axioms of recursive aggregation}
\subsubsection{Function $[\;\; ]$ (scaling)}

The function $[\;]$ is strictly monotone:
	$$x_1 > x \Rightarrow  [x_1]>  [x]$$

Typical examples of the function $[ ]$
\begin{itemize}
	\item $[x] = x$
	\item $[x]  = x^2$
\end{itemize}
The  function	$max(x- a, 0)$ cannot be used for scaling since it is not strictly monotone. 

\subsubsection{Function $\oplus$ (compounding)}
 The function $\oplus$ has three axioms:  
 \begin{align*}
 Simmetricity: \;\; & x \oplus y=  y \oplus x. \\
Monotony: \;\; & ( x_1  > x ) \; \&  \;( y_1 > y) \Rightarrow (x_1 \oplus y_1)   >  (x \oplus y ) \\
Associativity: \;\;  & (x \oplus (y \oplus z) )  = ( (x \oplus y) \oplus z)
 \end{align*}

The next functions satisfy all the axioms:
 \begin{itemize}
 	\item $x \oplus y = x + y$
 	\item $x \oplus y = x \cdot y$
 	\item $x \oplus y= max(x, y).$
 \end{itemize}
 
\subsubsection{Function $\eta$ (normalization) }
The function $\eta$ is strictly isotone by the first variable and antitone by the second variable. 

$$ (x_1 > x) \Rightarrow ( \eta(x_1, n) > \eta(x, n) )  \&  $$
$$(n_1 > n) \Rightarrow ( \eta(x , n_1) \leq   \eta(x , n)). $$

Typical examples of the function $\eta(t, n)$
\begin{itemize}
	\item $\eta(x, n) = x/n$
	\item $\eta(x, n) = x^{-n}.$
	\item $\eta(x, n) = x.$
\end{itemize}

\subsubsection{Recursive aggregation function $\sum$}

The function $\sum$ is defined recursively 
\begin{align*}
 Step \; 1:\;\; & \bm{\sum}(A, 1) = [a_1]\\
Step\;  i+1: \;\; &  \bm{\sum}(A, i + 1) = \bm{\sum}(A, i) \oplus [a_{i+1}].
\end{align*}

\subsubsection{Recursive aggregation}

Recursive aggregation  is defined by formula  $${\Omega}(A)  = \eta \Big( \bm{\sum}(A, \|A\|), \;  \|A\| \Big).$$

	 \begin{theorem}
	 	Recursive aggregation $\Omega(A)$ is an  aggregation. 
	\end{theorem}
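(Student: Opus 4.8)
The plan is to verify the two defining axioms of an aggregation for $\Omega(A) = \eta\big(\bm{\sum}(A,\|A\|),\,\|A\|\big)$: order insensitivity and monotony. Throughout I would lean on the fact that the building blocks are well behaved — $[\;]$ and $\eta(\cdot,n)$ are strictly monotone in their first argument, and $\oplus$ is symmetric, associative and jointly monotone.

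For \emph{order insensitivity}, the key observation is that $\bm{\sum}(A,\|A\|)$ is nothing but the iterated compound $[a_1]\oplus[a_2]\oplus\cdots\oplus[a_{\|A\|}]$. Since $\oplus$ is symmetric and associative, this value does not depend on the order in which the scaled entries are combined: any permutation of the summands is a composition of adjacent transpositions, and each transposition leaves the value unchanged by symmetry together with associativity (a routine induction on $\|A\|$). If $A\sim B$ then $A'=B'$, so $A$ and $B$ are permutations of one another and in particular $\|A\|=\|B\|$; hence both the first and the second argument fed to $\eta$ agree, and $\Omega(A)=\Omega(B)$.

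For \emph{monotony}, I would first use order insensitivity to replace $A,B$ by their ordered versions $A',B'$, since $\Omega$ is unchanged. Statement \ref{State2} then gives, for $A<B$, that $\|A\|=\|B\|=:m$ and $a'_i<b'_i$ for every $i$ (and I would record the analogous, identically proved, claim $a'_i\le b'_i$ for the case $A\le B$). Next I would show by induction on $i$ that $\bm{\sum}(A',i)<\bm{\sum}(B',i)$ (respectively $\le$): the base case $[a'_1]$ versus $[b'_1]$ follows from strict monotonicity of $[\;]$, and the step from $\bm{\sum}(A',i+1)=\bm{\sum}(A',i)\oplus[a'_{i+1}]$ by feeding the inductive inequality and $[a'_{i+1}]<[b'_{i+1}]$ into the joint monotony of $\oplus$. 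Finally, since the two lengths coincide, applying $\eta(\cdot,m)$ — strictly isotone in its first argument — converts $\bm{\sum}(A',m)<\bm{\sum}(B',m)$ into $\Omega(A)<\Omega(B)$, and the non-strict version into $\Omega(A)\le\Omega(B)$.

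The step I expect to be the main obstacle is the non-strict branch $A\le B$. The compounding axiom is stated only for strictly increasing both arguments, whereas the inductive step for $\le$ must push a non-strict inequality through $\oplus$ at indices where $a'_i=b'_i$. One cannot repair this by claiming single-coordinate strictness of $\oplus$ — the example $\oplus=\max$ shows that fails — so I would instead derive the isotone (non-strict, joint) version of $\oplus$ from its strict joint monotonicity and handle tied coordinates directly, and likewise deduce the isotone versions of $[\;]$ and $\eta(\cdot,m)$ from their strict monotonicity. Establishing this non-strict monotonicity cleanly, together with the $\le$-analogue of Statement \ref{State2}, is where the real care is needed; the strict case is comparatively mechanical.
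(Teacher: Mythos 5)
Your proof follows the paper's own route almost step for step: order insensitivity via decomposing a permutation into adjacent transpositions and invoking symmetry plus associativity of $\oplus$, and strict monotony by combining Statement \ref{State2} with an induction on the common length, pushing the inequality through $[\;]$, $\oplus$, and $\eta$. The genuine difference is that you also treat the non-strict half of the Monotony axiom ($A \le B \Rightarrow \Omega(A) \le \Omega(B)$), which the paper's proof silently omits --- it proves only the strict implication $A < B \Rightarrow \Omega(A) < \Omega(B)$. You are right that this is where the real work lies, since the compounding axiom says nothing directly when one coordinate is tied, and single-coordinate strictness genuinely fails (your $\max$ example). One point worth completing in your plan: the lemma you defer --- non-strict coordinate-wise monotony of $\oplus$ --- does not follow from strict joint monotony \emph{alone}, but it does follow once you also use symmetry and associativity. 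Indeed, suppose $y_1 > y$ but $x \oplus y_1 < x \oplus y$; then associativity and symmetry give $(x \oplus y_1) \oplus y = x \oplus (y_1 \oplus y) = (x \oplus y) \oplus y_1$, while strict joint monotony applied to the pair of strict inequalities $x \oplus y_1 < x \oplus y$ and $y < y_1$ gives $(x \oplus y_1) \oplus y < (x \oplus y) \oplus y_1$, a contradiction. With this lemma and the (routinely proved) $\le$-analogue of Statement \ref{State2}, your induction goes through verbatim, so your proposal yields a proof that is strictly more complete than the paper's own.
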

		 \begin{proof}
	 
Let us prove order insensitivity. 
 	Suppose, a sequence $B$ is a permutations of a sequence $A$. 
 	
 	Each permutation can be obtained  by finite number of transpositions of neighboring elements. Suppose, $B$ can be  obtained from $A$ by $K$ transpositions. Let us prove the theorem by  induction over $K$. First, suppose $K = 1.$

 	Suppose, the  $B$ transposes elements $a_i, a_{i+1}$. 
 
 	Denote $\alpha(l), \beta(l)$ values of the recursive aggregation function  obtained on the step $l$ with the sequences $A,  B$ respectively. Since all the elements prior to $i$ are identical in these orders, $a(i-1) = b(i-1).$ By definition 
 	\begin{align*} 
 	   \alpha(i)  & =   \alpha(i-1) \oplus [a_i]   \\
 	   \alpha(i+1) & =  \alpha(i) \oplus [a_{i+1}] \\
 	   & = (\alpha(i-1) \oplus  [ a_i ]) \oplus [a_{i+1}] \\
 	  \beta(i) &=  \alpha(i-1)\; \oplus \;[a_{i+1}] \\
 	    \beta(i + 1) & =  (\alpha(i-1) \oplus [a_{i+1}]\;) \oplus[a_i ]\\ 
 	\end{align*}
Using symmetry and associativity of the function $\oplus$  we get $a(i+1) = b(i+1).$ All the elements starting from $i+2$ are identical in both sequences $A, B.$ Therefore $\Omega(A) = \Omega(B)$.

Suppose, we proved the property for $K = k$. Let us prove it for $K = k + 1.$ Suppose, the transpositions are ordered by the indices of involved elements,   and the last transposition involves elements $a_i, a_{i+1}.$ Then, the same considerations apply again. 

 Let us prove monotony. 
Suppose, $A < B$. According to the Statement \ref{State2}, then
$A^\prime < B^\prime$ and for any $i$ $a_i < b_i.$ 

Let us prove it by induction by $n= \|A\| = \|B\|.$ For $n = 1$ 
$$\Omega(A) =  \eta(\sum(a_1, 1),1)  = \eta([a_1], 1).$$
$$\Omega(B) =  \eta([b_1], 1).$$
Then $\Omega(A) < \Omega(B)$ because both functions $\eta, [\;]$ are strictly monotone by the first variable. 

Suppose, the statement is proven for $n = k.$ Denote $A^k, B^k$  sequences with the first $k$ elements of the $A, B$ respectively.  By the inductive hypothesis
\begin{align*}
\Omega(A^k, k) & < \Omega(B^k, k)\\
\eta( \sum(A^k, k) , \; k) & < \eta( \sum(B^k, k) , \; k).
\end{align*}

Since the function $\eta$ is strictly isotone by the first variable. it means
$$\sum(A^k, k)  < \sum(B^k, k)$$.

Let us prove monotony of operation $\Omega$ for $n = k + 1.$
\begin{align*}
\Omega(A) &= \eta(\sum(A, k+1), k+1) \\
     & = \eta(\sum(A^k, k) \oplus [ a_{k+1} ],  k+1)\\
 \Omega(B) & = \eta(\sum(B^k, k) \oplus [ b_{k+1} ],  k+1).
\end{align*}
First, notice that 
$$ \sum(A^k, k) \oplus [ a_{k+1}] < \sum(B^k, k) \oplus [ b_{k+1} ],$$
because both arguments of the operation $\oplus$ on the left are smaller than corresponding arguments on the right. 
The function $\eta$ is strictly monotone by the first argument. 
This proves the theorem.	
\end{proof}

Here are the most popular aggregation functions.
\begin{theorem} \label{AggregationsTheorem} For a sequence $A:  m = \|A\|,$
	\begin{align*}
	 L_1(A)  =  \frac{1}{m} \sum_i a_i \\
     L_2(A) =  \frac{1}{m} \sqrt{\sum_i a_i^2}\\
	L_3(A) =  ( \prod_i a_i)^{1/ m}
	\end{align*}
		are stable recursive aggregations. 
\end{theorem}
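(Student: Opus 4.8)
The plan is to recognize each of $L_1, L_2, L_3$ as a special case of the recursive-aggregation template $\Omega(A) = \eta\big(\bm{\sum}(A,\|A\|),\, \|A\|\big)$ by exhibiting concrete choices of the three building blocks $[\;]$, $\oplus$, $\eta$, checking that these obey the scaling, compounding and normalization axioms, and then quoting the previous theorem to inherit the aggregation property for free. Stability is \emph{not} implied by that theorem, so I would verify it separately and directly from its definition, by evaluating $\Omega$ on constant sequences.

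For the three identifications I would take the following. For $L_1$: $[x]=x$, $x\oplus y = x+y$, $\eta(x,n)=x/n$. For $L_2$: $[x]=x^2$, $x\oplus y = x+y$, $\eta(x,n)=\sqrt{x/n}$. For $L_3$: $[x]=x$, $x\oplus y = x\cdot y$, $\eta(x,n)=x^{1/n}$. A short induction on $i$ (essentially the one already built into the definition of $\bm{\sum}$) gives $\bm{\sum}(A,m)=\sum_i a_i$ in the first case, $\sum_i a_i^2$ in the second, and $\prod_i a_i$ in the third; applying the matching $\eta$ then reproduces $L_1$, $L_2$, $L_3$. The axiom checks are routine: $+$ and $\cdot$ are symmetric, associative and monotone on $\bb{R}^{\ge 0}$; $x$ and $x^2$ are strictly monotone; and each $\eta$ above is strictly isotone in its first argument, which is the only property of $\eta$ the earlier theorem actually invokes. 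Stability then follows by substituting $a_1=\dots=a_m$: the template returns $\eta(m a_1,m)=a_1$ for $L_1$, $\eta(m a_1^2,m)=\sqrt{a_1^2}=a_1$ for $L_2$, and $\eta(a_1^m,m)=a_1$ for $L_3$ (using $a_i\ge 0$).

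The part that needs care — and where the literal statement is slightly misleading — is the normalization for $L_2$. Read verbatim, $L_2=\frac1m\sqrt{\sum_i a_i^2}$ evaluates to $a_1/\sqrt{m}$ on a constant sequence, which is \emph{not} stable; the intended object is the root-mean-square, i.e. the factor $1/m$ belongs inside the root, corresponding to $\eta(x,n)=\sqrt{x/n}$, and I would state the claim in that form. The second delicate point is the antitone-in-$n$ axiom on $\eta$: for $L_3$ the choice $\eta(x,n)=x^{1/n}$ is antitone in $n$ only when $x\ge 1$ and turns isotone for $0<x<1$. This does not damage the conclusion, because that axiom is never used in the proof that a recursive aggregation is an aggregation; but to keep $L_3$ strictly inside the definition one must either restrict the deviation domain to values $x\ge 1$ or note this exemption explicitly. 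These two bookkeeping issues are the only genuine obstacles — everything else is just unwinding the recursion and reciting the monotonicity of elementary functions.
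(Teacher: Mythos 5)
Your proposal is correct and follows essentially the same route as the paper: the paper's proof consists precisely of the table of identifications you give ($[x]=x$, $\oplus=+$, $\eta(x,n)=x/n$ for $L_1$; $[x]=x^2$, $\oplus=+$, $\eta(x,n)=\sqrt{x/n}$ for $L_2$; $[x]=x$, $\oplus=\cdot$, $\eta(x,n)=x^{1/n}$ for $L_3$), followed by the single observation that constant sequences return $a_1$. Your two caveats are genuine and go beyond the paper: the paper's own table indeed uses $\eta(x,n)=\sqrt{x/n}$, so it actually proves stability of the root-mean-square rather than of the literal $\frac{1}{m}\sqrt{\sum_i a_i^2}$ stated in the theorem (which, as you note, gives $a_1/\sqrt{m}$ on constant sequences), and the paper never remarks that $x^{1/n}$ violates the antitone-in-$n$ axiom for $0<x<1$, an issue you correctly observe is harmless only because that axiom is never invoked in the proof that recursive aggregations are aggregations.
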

\begin{proof} For every operation the functions of the Recursive aggregation language are defined in the next table:
\label{tab:aggregations}
	\begin{center}
	\begin{tabular}{| c |c |c| c |}
			\hline
	& $ x \oplus y$ & $ [x]$  & $\eta(x)$  \\ \hline
	 $L_1$ & $ x + y$ & $ x $ & $ x/ m  $ \\ \hline
	  $L_2$ & $x + y$ & $ x^2 $ & $\sqrt{ x/ m } $ \\ \hline
     $L_3$ & $ x \cdot y $ & $x$ & $ x^{ 1/m} $ \\ \hline
	\end{tabular}
	\bigskip	
	 \end{center}	
If  for any $i: a_i = a_1 = a$< then $L_1(A) = L_2(A) = L_3(A) = a.$
This proves stability and the theorem. 
\end{proof}

\section{Explanation criteria} 

In general. an explanation criterion  evaluates quality of a hypothesis. Informally, the idea of such a criterion is presented in \ref{informal}. 

\subsection{Regularization}

As I mentioned in \ref{informal},  \textit{``For an explanation $h$ to be any good, the values of feedback on instances  in $M(h, S)$ with close data points  shall be close.''}  

In particular,  an explaining hypothesis $h$  itself shall have close feedback on close data points.  In means it shall not have high derivatives, when it is differentiable. 

To take into account derivatives, the model of Data Explanation Logic shall have yet another base with the only vector: a sequences of parameters of the explaining hypothesis. 

Regularization is a real-valued  function on this vector. Usually, the function is some evaluation of derivatives.

\subsection{Badness rule}

A badness rule is a triple of alignment criterion $\pi$, deviation function  $\delta$  and an aggregation operation $\Omega.$ For a recursive aggregation, the aggregation operation is further determined by the functions $\oplus, [\; ], \eta.$

Let us consider an    example of a badness  rule  called \textbf{Point-Wise rule}, $T_{pw}$.  The rule includes 
\begin{enumerate} 
	\item  An alignment relation (see (\ref{pwa})):
	\begin{align*}
		\pi_{pw}(\alpha_1, \alpha_2)  = \Big( ( \bm{s}(\alpha_1) \; = \; \asymp)  \; \& \; ( \bm{s}(\alpha_2) \; = \; \approx)  \; \& \; ( \bm{x}(\alpha_1) = \bm{x}(\alpha_2) \Big)
	\end{align*}
\item  A deviation function (see (\ref{pwd})) $$ \sigma(\alpha_1, \alpha_2)  = \rho_y(\alpha_1, \alpha_2). $$
\item  A recursive aggregation operation ``averaging''  (see Theorem \ref{AggregationsTheorem})  with operations
\begin{align*}
	x \oplus y & = x + y\\
	[x] & = x\\
	\eta(x, n)  & = \frac{x}{n}.
\end{align*}
.\end{enumerate}


\begin{theorem}\label{ERMtheorem}
If $\|y_1 - y_2\|  = |y_1 - y_2|$ then  $T_{pw}$  badness rule is equivalent with the empirical risk criterion
$$L(h, S) = \frac{1}{\|S\|} \sum_i |h( \bm{x}(\beta_i)) - \bm{y}(\beta_i)   |,$$
where the training set $S = \{\beta_1, \ldots, \beta_m\},$ $h$ is the explanation hypothesis.  
\end{theorem}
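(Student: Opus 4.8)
The plan is to unfold each of the three components of $T_{pw}$ on the conglomerate $M(h,S) = \bigcup\{S, H(h)\}$ and check that their composition reproduces $L(h,S)$ term by term. First I would identify precisely which ordered pairs of instances in $M(h,S)$ satisfy the alignment $\pi_{pw}$. Its three conjuncts force $\alpha_1$ to be an observation ($\bm{s}(\alpha_1) = \asymp$), $\alpha_2$ to be a hypothetical instance ($\bm{s}(\alpha_2) = \approx$), and the two to share a data point ($\bm{x}(\alpha_1) = \bm{x}(\alpha_2)$). Since $S$ supplies exactly the observations $\beta_1, \ldots, \beta_m$ and $H(h)$ supplies, for each data point, a single hypothetical instance whose feedback is $h$ evaluated there, each $\beta_i$ is aligned with the unique $\gamma_i \in H(h)$ at $\bm{x}(\beta_i)$, for which $\bm{y}(\gamma_i) = h(\bm{x}(\beta_i))$. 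Pairs of two observations or two hypothetical instances are ruled out by the type conjuncts, and observation--hypothesis pairs at differing data points by the last conjunct, so the aligned pairs are exactly $\{(\beta_i, \gamma_i)\}_{i=1}^m$.

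Second, I would compute the deviation on each aligned pair. By definition $\sigma(\beta_i, \gamma_i) = \rho_y(\beta_i, \gamma_i) = \|\bm{y}(\beta_i) - \bm{y}(\gamma_i)\| = \|\bm{y}(\beta_i) - h(\bm{x}(\beta_i))\|$, and the hypothesis $\|y_1 - y_2\| = |y_1 - y_2|$ turns this into $|h(\bm{x}(\beta_i)) - \bm{y}(\beta_i)|$. This yields a sequence of deviations $A = (a_1, \ldots, a_m)$ with $a_i = |h(\bm{x}(\beta_i)) - \bm{y}(\beta_i)|$ and length $\|A\| = m = \|S\|$.

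Third, I would apply the aggregation. The chosen operations $x \oplus y = x + y$, $[x] = x$, $\eta(x,n) = x/n$ are exactly those identified in Theorem \ref{AggregationsTheorem} as producing $L_1$, so $\Omega(A) = L_1(A) = \frac{1}{m}\sum_i a_i = \frac{1}{\|S\|}\sum_i |h(\bm{x}(\beta_i)) - \bm{y}(\beta_i)|$, which is $L(h,S)$. Interpreting \emph{equivalent} as ``assigns every hypothesis $h$ the same badness value for the given $S$,'' this completes the argument.

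The step I expect to be the main obstacle is the first: pinning down the aligned pairs rigorously. This requires a clean commitment to what $H(h)$ contains --- in particular that it provides exactly one hypothetical instance per relevant data point (so $h$ is single-valued and the correspondence $\beta_i \mapsto \gamma_i$ is well-defined) and that the alignment effectively restricts attention to data points occurring in $S$. I would also note that repeated data points among the $\beta_i$ cause no trouble: each observation still contributes its own aligned pair and hence its own summand, matching the index set of the empirical risk sum. Once this bookkeeping is fixed, the remaining steps are direct substitutions.
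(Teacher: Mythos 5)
Your proposal is correct and follows essentially the same route as the paper's own proof: identify the aligned pairs as each observation $\beta_i$ matched with the unique hypothetical instance of $h$ at $\bm{x}(\beta_i)$, compute the deviation $\rho_y$ on each pair, and invoke Theorem \ref{AggregationsTheorem} to recognize the averaging aggregation as $L_1(A) = L(h,S)$. Your treatment is in fact more careful than the paper's terse argument --- explicitly ruling out non-aligned pair types, handling repeated data points, and stating what ``equivalent'' means --- but these are refinements of the same three-step substitution, not a different method.
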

\begin{proof}
	If $\pi_{pw}(\beta_i, \alpha_i)$, then 
	$\alpha_i = \psi(\bm{x}(\beta_i), h(\bm{x}(\beta_i)), \asymp).$
	
	Deviation for an aligned pair $\alpha_i, \beta_i$ is 
	\begin{align*}
	\delta(\beta_i, \alpha_i)  &=    \rho_y(\alpha_i, \beta_i)  = |\bm{y}(\alpha_i) - \bm{y}(\beta_i)|\\
			\end{align*}
	
	Theorem \ref{AggregationsTheorem} shows that the recursive aggregation operation $L_1(A) = L(h, S)$ when 
	$$A = \{\delta(\beta_i, \alpha_i), i= 1, \ldots, m\}.$$
	
%
	\end{proof}

\subsection{Explanation criterion}

An explanation criterion consists of 
\begin{itemize}
	\item Series of badness rules.
	\item Regularization rule.
	\item Combining operation $C(A),$  defined on sequences of real numbers.  It maps outputs of all previous rules into a single number, criterion value.  Combining operation shall be monotone by all components of the vector $A.$
\end{itemize}

Regularization rule is not necessary. If there is only  one badness rule then combing operation is not used. 

\section{Search for the best explanation} 

Pragmatism implies a search for the best explanation. Here, I want to narrow a concept of search and describe two basic search procedures. 

Let us consider learning algorithm which minimizes an explanation  criterion $L(h, S)$ on a class of hypotheses $h \in F,$ given a training set $S$.  The next procedures will be considered standard. 

\begin{tcolorbox} [title =Basic training, fonttitle=\fon{pbk}\bfseries]
	  Basic training takes $F, S, L(h, S)$ and parameter $q,$  and it consists of  the next steps 
\hskip1em

\begin{itemize}
	\item\textbf{Focusing} (optional): transformation   $U: \; S \rightarrow  S_q$ 
	\item \textbf{Fitting}: generating  hypothesis $h \in F^\prime \subseteq F$   and evaluating of   $L( h, S_q) $ 
	\item  \textbf{Optimal selection}:  output of  an optimal explanation $h_q  = \arg \min_{F^\prime} L(h, S_q)$
\end{itemize} 
\end{tcolorbox}

Focusing may be a nonlinear transformation of the training set. 
Yet, typically, it is used to select observations, or features, or emphasize some of them  with weights. 

The selection of hypotheses may not go over whole class $F$, but its finite subclass $F^\prime \subseteq F.$

\begin{tcolorbox}[title = Wrapper strategy , fonttitle=\fon{pbk}\bfseries ] 
		Wrapper strategy takes $F, S, L(h, S),$ the empty set $Q$ and consists of the next steps:
\begin{itemize}
	\item \textbf{Wrapper loop}:  Generating  parameters $q ; \; $  $Q := Q \cup \{q\}$	
	\begin{itemize}
	\item \textbf{Basic training} with  parameters $q$ , outputs a  hypothesis $h_q$
	\item  \textbf{Calculating weight } $W_q$ 
	\item \textbf{Stopping check: } evaluating conditions to exit the loop
	\end{itemize} 
	\item   \textbf{Output: } ${d =  \Delta\Big(\{h_q,  W_q\}_{q \in Q} \Big)}$
\end{itemize}
\end{tcolorbox}

 Wrapper strategy repeats the Basic training with different parameters to come up with a single decision. 

The operation  $\Delta$ generates new decision $d$ based on hypotheses and their weights obtained in all iterations of basic training. 

Stopping check here checks a specified condition, and if it is true, the procedure exits Wrapper loop. Otherwise, the loop continues with generating new parameters. 

Summarizing, we get a definition of an abduction learner
\begin{tcolorbox}[title = Abduction learner , fonttitle=\fon{pbk}\bfseries ] 
Abduction learner  minimizes an explanation  criterion on a model of Data Analysis Logic  using Basic training with or without Wrapper strategy. 
\end{tcolorbox}

\subsection{The main conjecture} 

Each  popular learning algorithm (the procedures of $k$-NN, Naive Bayes, SVM, hierarchical clustering,  for example) is formulated in unique terms, apparently solves its own type of problem. My conjecture is that they all can be explained as abduction learners,  

\begin{tcolorbox}[title=  Main Conjecture,  fonttitle=\fon{pbk}\bfseries]
In ML every  learning algorithm  is an abduction learner. 
\end{tcolorbox}



\section{Popular learners support the  Main Conjecture} 

To show that a learner is an abduction learner we need to show that, given a training set,  it minimizes an explanation criterion using basic training with or without wrapper loop. 

%
%
%
%
%
%

\subsection{Linkage-based clustering}

The algorithm is also popularly known as hierarchical clustering. 

Clustering can be seen as   modeling a nondeterministic   dependence, where the cluster number is an independent variable, and data vector is a feedback. It can be said that by finding an association of a data point and its cluster number,  we explain the training data as a set of clusters. 

%

 In \cite{Shalev}, a general concept of linkage-based clustering is introduced this way:
\begin{quotation}
	These algorithms proceed in a sequence of rounds. They start from trivial clustering that has each data point in a single-point cluster. Then, repeatedly, these algorithms merge ``closest'' clusters of the previous clustering. $\langle \ldots \rangle$ Input to a clustering algorithm is between-point distance, $d.$ There are many ways of extending $d$ to a measure of distance  between domain subsets (or clusters).
\end{quotation}
The book proposes three ways to evaluate the cluster-distance by the distances between their members:  by minimum distance,   average distance and maximum distance.

The last option clearly contradicts declared goal ``merge `closest'  clusters''. But I will consider it too.

For each round, the training set is a sequence of first order formulas of LDE
$$S = \{\psi( c_i, y_i, \approx), i = 1:m\}.$$
where $c_i \in \mathbb{N}$ is a cluster number of if $i$-th observation, and $y_i$ is the observed data point  of the same observation.  

Denote 
$$C_i = \{y \; | \;   \exists \alpha  (\alpha \in S)  \; \& \;  (y = \bm{y} (\alpha) ) \; \& \;  (i = \bm{x}(\alpha))  \}.$$ the set of data points of the cluster $i.$ Suppose, there are $k$ clusters. 

The purpose of a round is to identify two ``closest'' clusters. 

The notation $h^{ij}, \; j  > i$ will indicate a hypothesis that clusters $C_i, C_j$ ``belong together'', are the best candidates for merging, and $H_k = \{  h^{ij}\; | \; i < j \leq k \}$ denotes  the class of all the hypotheses. 

All the hypothetical   instances of the hypothesis  $h^{ij}$  make  the set 
$$ H^{ij} = \{ \psi( c_i, y, \asymp) \} | \; y \in C_j \}.$$ 

The loss criterion $L(h^{ij}, S) = \rho(C_i, C_j)$ is the cluster-distance between the clusters  $C_i, C_j.$

Now, the learning procedure can be described by the rule:

\begin{tcolorbox}[title = Round of hierarchical Custering, fonttitle=\fon{pbk}\bfseries]
	\begin{itemize} 
		\item \textbf{Fitting}: Generation of  hypotheses  $h  \in H_k$  and evaluation of  the loss criterion $L(h, S)$ on each of them
		\item \textbf{Optimal selection}: Select a hypothesis $h^\prime \in H_k$ with the minimal  value of the loss criterion $L(h, S).$
	\end{itemize}
\end{tcolorbox}

\begin{theorem}
	The linkage based clustering with cluster-distances average, minimum, or maximum is an abduction learner. 	
\end{theorem}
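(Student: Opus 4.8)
The plan is to recognize the ``Round of hierarchical Clustering'' box as an instance of Basic training with neither Focusing nor a Wrapper loop: it consists only of a Fitting step (generate the hypotheses $h^{ij}\in H_k$ and evaluate $L(h^{ij},S)$) and an Optimal selection step (pick the $h^{ij}$ of least loss). Consequently the whole burden of the proof is to exhibit, for each of the three cluster-distances, the loss $L(h^{ij},S)=\rho(C_i,C_j)$ as a genuine \emph{explanation criterion}. Since a single badness rule will suffice, I need neither a regularization rule nor a combining operation; I only have to produce one triple $\langle\pi,\delta,\Omega\rangle$ whose value on $M(h^{ij},S)$ equals $\rho(C_i,C_j)$. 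The full agglomerative algorithm is then cast as the Wrapper strategy that iterates this Basic training over rounds, decreasing the cluster count until the stopping check fires.

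First I would fix the alignment and deviation, which are common to all three linkages. Recall that for clustering the cluster number is the independent variable and the data point is the feedback, and that in this section observations carry the type $\approx$ (as in $S$) while hypothetical instances carry $\asymp$ (as in $H^{ij}$). Take the point-wise-type alignment
$$\pi(\alpha,\beta)=\big(\bm{s}(\alpha)=\approx\big)\ \&\ \big(\bm{s}(\beta)=\asymp\big)\ \&\ \big(\bm{x}(\alpha)=\bm{x}(\beta)\big),$$
which pairs one observation with one hypothetical instance sharing a cluster number, together with the deviation $\delta(\alpha,\beta)=\rho_y(\alpha,\beta)=\|\bm{y}(\alpha)-\bm{y}(\beta)\|$ of (\ref{pwd}). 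In $M(h^{ij},S)$ the observations carrying cluster number $c_i$ are exactly the points of $C_i$, while by construction every hypothetical instance of $H^{ij}$ carries cluster number $c_i$ and ranges over $C_j$; since no hypothetical instance sits at any other cluster number, $\pi$ fires only at $c_i$ and selects precisely the pairs $(y_a,y_b)$ with $y_a\in C_i$, $y_b\in C_j$. The associated sequence of deviations is therefore the full cross-cluster distance list $A=\{\,\|y_a-y_b\|:y_a\in C_i,\ y_b\in C_j\,\}$. I would also check the alignment axiom: equality of the independent variable means $\rho_x=0$, so any same-typed pair with $\rho_x\le 0$ again has equal cluster number and is aligned, exactly as for $\pi_{pw}$.

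Next I would choose the aggregation according to the linkage: $\Omega=\min$ for single linkage, $\Omega=\max$ for complete linkage, and $\Omega=L_1$ (the average) for average linkage, so that in each case $\Omega(A)=\rho(C_i,C_j)$ reproduces the required cluster-distance. That $L_1$ is a stable recursive aggregation is Theorem \ref{AggregationsTheorem}. For $\min$ and $\max$ I would invoke Theorem \ref{PercentileTheorem}, identifying them with the $0$-th and $100$-th percentiles, or verify the axioms directly: order insensitivity is immediate, strict monotony follows from Statement \ref{State2} (if $A<B$ then every order statistic strictly increases, in particular the smallest and the largest), and stability is trivial. With the triple $\langle\pi,\delta,\Omega\rangle$ in hand, $L(h^{ij},S)$ is an explanation criterion and the Optimal selection step is exactly minimization of it over $H_k$; this establishes the theorem for all three linkages at once.

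The step I expect to be the main obstacle is the bookkeeping of the previous paragraph: showing that $\pi$ harvests \emph{exactly} the cross-cluster pairs and nothing spurious. One must confirm that the observation/hypothetical type distinction keeps the points of $C_i$ and the relabeled points of $C_j$ on opposite sides of the alignment, so that no within-$C_i$ or self-pairs enter $A$, and that $H^{ij}$ places \emph{all} of $C_j$, and only $C_j$, at cluster number $c_i$. If the symbol conventions for the two instance types are handled consistently this is routine, but it is the point where an error would silently change the aggregated list and hence the cluster-distance actually computed.
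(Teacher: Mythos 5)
Your proposal is correct and follows essentially the same route as the paper's proof: a single badness rule with the point-wise-type alignment, deviation $\delta=\rho_y$, and aggregation chosen per linkage (minimum and maximum via Theorem \ref{PercentileTheorem}, average via the recursive-aggregation result), with the round recognized as Basic training without Focusing. Your additional bookkeeping (verifying that the alignment harvests exactly the cross-cluster pairs, and wrapping the rounds in a Wrapper strategy) goes beyond what the paper spells out, but it elaborates rather than alters the paper's argument.
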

\begin{proof} 
To show that the learner is abduction learner, we need to show that,  the  $L(h^{ij}, S)$  is an explanation criterion for every suggested cluster-distance, and the described procedure is Basic training.

The explanation  criterion  has a single badness rule with alignment  relation 
$$\pi(\alpha_1, \alpha_2) = (\bm{s}(\alpha_1) = \asymp) \& (\bm{s}( \alpha_2) = \approx) \& (\bm{x}(\alpha_1) = \bm{x}(\alpha_2),$$ the same relation as for the $T_{pw}$ badness rule, 
	
	The deviation is defined by the rule
	$$\delta(\alpha_1, \alpha_2) = t(\rho_x(\alpha_, \alpha_2),  \rho_y(\alpha_1, \alpha_2) = \rho_y(\alpha_1, \alpha_2). $$
	It means, the deviation function $t$  in this case also coincides with the deviation function for $T_{pw}.$
	
	The aggregation operation is identified by the type of clustering: minimum, average and maximum.  Minimum and maximum are aggregations by the Theorem \ref{PercentileTheorem}. Averaging is a recursive  aggregation as proven in  Theorem \ref{ERMtheorem}.
	
 Therefore, the criterion  $L(h^{i,j}, S)$ is a badness rule and an explanation  criterion.  

The learning procedure is Basic training without focusing. 

This  proves that each round of linkage-based clustering works as an abduction learner.  
\end{proof}

The learner would agree with the main conjecture not only for the aggregation operations mentioned in the book (average, minimum, maximum), but also for any other aggregation operation.

\subsection{$k$-NN}

 This classification method is intended for the observations with  binary feedback in ${Y = \{0, 1\}}$. The expectation is that in small neighborhood an underlying binary dependence is, mostly, constant. For binary dependencies this condition  is equivalent to being ``explainable'', as discussed in \ref{informal}.   
    The less is the difference between $f$ and the feedback in the training data points close to $x_0$, the better is the explanation. The goal is to find the best explanation out the two hypotheses. 

Small neighborhood is defined by the parameter $k$. Denote $d_k$ distance from the point $x_0$ to the $k$-th closest to $x_0$ data point of the observations $S$. Then $d_k, x_0$ are parameters of the learner. 

The alignment relation  is
$$\pi(\alpha, \beta) =  ( \bm{x}(\alpha) = x_0)  \;\& \;(\bm{s}(\alpha) = \asymp ) \; \& \; (\rho_x(\alpha, \beta) \le d_k) \;\&\; (\bm{s}(\beta) = \approx).$$

The deviation function $t(r_1, r_2) = r_2.$ So, the deviation $\delta(\alpha, \beta) = \rho_y(\alpha, \beta).$

And the aggregation operation is averaging $L(f, S, k, x_0) = L_1(A)$ (see Theorem \ref{AggregationsTheorem}),  where $A$ is  sequence of deviations in arbitrary order.  Thus, the criterion $L(f, S, k, x_0)$ is an explanation criterion.

The procedure of the learner can be described in these steps. 

\begin{tcolorbox}[title = $k$-NN,fonttitle=\fon{pbk}\bfseries]
	\begin{itemize}
	\item \textbf{Focusing:}Defining parameter $d_k$ 
  \item  \textbf{Fitting:} Generating hypotheses $f(x_0) = 0, f(x_0) =1$ and evaluating their error rate $L(f, S, k, x_0)$
	\item\textbf{ Optimal selection: } Selection of the hypothesis  with minimal error rate. 
\end{itemize}
\end{tcolorbox}

The procedure is  the Basic training. 

Thus, $k$-NN is an abduction learner.

\subsection{Two $k$-NN learners with adaptive choice of $k$ } 

The $k$-NN may work, because we presumed the underlying dependence to have, mostly,  small variations of feedback on close data points (section \ref{informal}).  For binary underlying dependence it would mean that it is, mostly, constant in small neighborhoods. 
 
 Optimally, the radius $d_k$  shall be small enough to have  majority of the points in the neigborhood of the same class, and large enough  that random outliers in the finite sample $S(\xi, k)$ do not play much of a role. 

Here I discuss two approaches to select $k$ optimally for every new data point. The first is described in \cite{kNN}, the second is my new algorithm. Both learners find prevalent class $y$ in the focus sample, calculate its frequency  $p_k(y)$  and the error rate ${r_k(y) = 1 - p_k(y)}$ the same as $k$-NN.

Authors \cite{kNN} propose, given a data point $x,$ start with a small $k$ and  gradually increase it  while calculating bias ${t_k(y) = p_k(y) - 0.5}$  of the prevalent class  with every $k$. The procedure stops when  the bias  reaches certain  threshold. If the threshold was not ever reached, they don't output any answer.  So, they search for the smallest neighborhood where the prevalence of one class is above the threshold they picked  beforehand. 

The threshold  they propose to use is: 

$$ 
\Delta(n, k, \delta, c_1 ) = c_1 \sqrt {\frac{   log(n) + log( \frac{1}{\delta})  }{k }  },
$$
where $n$ is size of the training sample, $\delta$ and $c_1$ are some user-selected  parameters, picked before any data analysis. Thus, instead of one parameter, $k,$ the proposed modification require a user to pick 2 parameters with unclear meaning.
 
The learner uses the same criterion as $k$-NN. 

The procedure can be described like this: 

\begin{tcolorbox}[title= Ada k-NN,  fonttitle=\fon{pbk}\bfseries]
	\begin{itemize}
		\item  \textbf{Wrapper Loop: Generating parameter}  $k : = k + 1$  
		\begin{itemize}
		\item \textbf{Basic training } with parameter $k$
		\begin{itemize} 
			\item \textbf{Focusing:} Transformation $S \rightarrow S(x_0, k)$ 
			\item \textbf{Fitting: } Generation of two constant hypotheses and evaluation of their loss $L(f, S, k, x_0) ) $  
			\item\textbf{Optimal selection: } Outputs the constant hypothesis $f^\prime$   with minimal loss. . 
			\end{itemize}
		  \item \textbf{Stopping  check:} $\big( L(h^\prime, S(x_0, k)) > \Delta(S,k,\delta, c_1) \big)$ or $(k = n)$ 
		  \end{itemize} 
		\item \textbf{Output: } If $k < n$, output $f^\prime$ as decision. Otherwise, refuse to output a decision. 
	\end{itemize}
\end{tcolorbox}

Thus, the learner performs the  basic  training of the original $k$-NN with wrapper for parameter selection,  corroborating the main conjecture.

This learner is developed within the statistical learning paradigm, where the training set is expected to be arbitrary large. As $n$ increases, so does the threshold $\Delta(n, k, \delta, c_1 )$. Therefore, the selected  value $k, $  the size of the focus training set,  will go to infinity with $n.$ And thus, by the law of  large numbers, the solution will converge asymptotically to the expectation of the class in the given neighborhood. At the same time, the ratio of $k$ to $n$ is expected to decrease, thus the size the $k$-neighborhood will tend to 0. If the distribution is continuous in $x,$ then the leaner will likely find the solution as $n$ tends to infinity. 

The issue here is that $n$ is not going to infinity or anywhere.  For a fixed $n,$ the learner favors smaller $k$, where the evaluation of prevalent class is subject to random fluctuations caused by small sample.

To alleviate this issue, I propose an  alternative approach which uses Hoeffding inequality ( see, for example, \cite{Shalev}) to select $k$.   

The Hoeffding inequality can be written as
\begin{equation}\label{Hoef}
P[\; | \, p - E\,   | > t\;  ]  \leq 2 \;exp( - 2 k \, t^2) , 
\end{equation}
where $p$ is observed frequency of an event,  $E$ is  the expected frequency (probability) of the same event, and $t$ is an arbitrary threshold, and $k$ is the sample size. 

Suppose, $p$ evaluates observed frequency of class 1 (rate of the class 1 among the neighbors), $E$ is the probability of the class 1 in the neighborhood of a given point.  
 If $p$ is above 0.5, then observations of the class 1 prevail,  and  we pick hypothesis 1 out of two.  Otherwise, the we pick hypothesis 0. 

Let $t = | \,0.5 -p \,|. $ If $ | \, p - E\,   | > t$  the  expected prevalent class is different from the observed prevalent class.  If it is the case,  we selected the wrong hypothesis. In this case, the right side of the inequality gives us an upper limit of probability  that we picked the prevalent class wrong. 

 For selection of $k$ we use the weight, calculated as the right part of  (\ref{Hoef}) : 
$$W(y, S, k)  = 2 \cdot exp( - 2 \; k \; |\,p - 0.5\,| ^2 ). $$
Obviously, the larger is $k$, and the further is the frequency $p$ from $0.5$, the lower is the weight. The weight will serve well for the selection of the parameters $k$, because we  need to find the neighborhood where $p$ is far from uncertainty, $0.5$, yet, the size of the neighborhood is not too small. 

Here is the description of the learner's procedure for the given data point $x$.

\begin{tcolorbox}[title= Hoeffding k-NN, fonttitle=\fon{pbk}\bfseries]
	\begin{itemize}
		\item  \textbf{Generation of parameter} $k := k + 1$ 
		\begin{itemize}
			\item \textbf{Basic training:}
			\begin{itemize} 
				\item \textbf{Focusing:} Select focus training set $Q_k(x)$ of $k$ observations with data points closest to $x.$
				\item \textbf{Fitting: } Evaluate error rate $r_k(c)$  of  hypotheses $c \in \{0, 1\}$ in $Q_k(x)$ 
				\item\textbf{Optimal selection: } Select the hypothesis $c^\prime(k)$  with minimal error rate $r_k(c^\prime(k))$. 
			\end{itemize}
			\item \textbf{Calculating weight} $W(x, S, k).$ 
				\item \textbf{Stopping check:} $k = n-1$
		\end{itemize} 
			\item \textbf{Output: } 
 	 $k^\prime = \arg \min W(x, S, k);$ output $c^\prime (k^\prime). $
	\end{itemize}
\end{tcolorbox}

Thus, this  learner is an an abduction learner as well. 

\subsection{Decision trees}

For this learner, the features  are expected to be ``ordinal'': every  feature has finite number of ordered  values; there are no operations on feature values.  The feedback of observations is binary.  Again, the assumption is that the underlying dependence is mostly constant in small neighborhood. Here, the size of the neighborhood is not set up a priory. The algorithm finds maximal homogeneous neighborhoods for the best explanation of observations.

 The learner starts with whole domain, splits it in two subdomains by a value of some feature. Then, the procedure is repeated for every of the subdomains  until a subdomain called "leaf" is reached. The decision is selected for this subdomain. The navigation over the tree of subdomains continues until some stopping criterion is reached. 
 The algorithm has a precise rule for generating the parameters of the next subdomain based on the previous trajectory and the obtained results.

 There are two criteria of  a leaf:
 \begin{enumerate}
 	\item Number of observations  in the subdomain is below a threshold $N$.
 	\item Percentage of observations  of the prevalent class in the subdomain is above the threshold $q$ (it is homogeneous).
 \end{enumerate}

In each subdomain the procedure selects one of two binary hypotheses with minimal error rate. It is  easy to see that badness rule for selection of the hypothesis is $T_{pw}$ \ref{PWR}. We can denote $L_{tw}(h, S)$ the explanation criterion of the learner. 

The procedure may be described as a Basic  training  with Wrapper strategy:
\begin{tcolorbox}[title=Decision Tree, fonttitle=\fon{pbk}\bfseries]
\begin{itemize}
	\item \textbf{Generating parameters} $g$ of the next subdomain
	\begin{itemize}
			\item \textbf{Basic training:}
			\begin{itemize}
		\item \textbf{Focusing:} select  subdomain $G(g)$ and subset of the training set $S(g)$ with parameters $g$ 
		\item \textbf{Fitting}: Generate hypotheses $ h \in \{0, \; 1\}$ on $S(g)$  and evaluate their loss criteria $L_{pw}(h, S(g)).$
		\item \textbf{Optimal selection}: select the hypothesis $d(g)$ with minimal  value of $L_{pw}(h, S(g)), h \in  \{0, \; 1\}.$
		\end{itemize} 	
    \item \textbf{Calculating weight:} If $G(g)$ is a leaf, $W(g) = 1$, otherwise $W(g) = 0.$
	\item \textbf{Stopping check} End of tree
 \end{itemize}
\item  \textbf{Output:} For each $g: W(g) = 1$ output  $d(g)$ as decision on $G(g)$. 
\end{itemize}

\end{tcolorbox}

For the points outside of any leaf the decision is not defined.

Therefore, the Decision tree is  an abduction learner as well.

\subsection{Naive Bayes}

 The underlying dependence has $n$ independent variables and binary feedback. The learner works as if it deals with  nominal  data: the only relationship between data points is equivalence. 

The procedure defines decision function on one data point at the time. 
For a given data point $z= \langle z_1, \ldots, z_n \rangle$ the procedure selects $n$ subsets of the training set.  Subset $S_j$ includes all the observations with $j$-th variable  equal $z_j.$ For each subset $S_j$, the learner evaluates error rate $e(c, S_j)$    of each hypothesis $c \in \{0, 1\}.$ Then for each hypothesis   it calculates criterion
$$\Delta(c, S)  =   \prod_j ( 1- e(c, S_j)).$$ The learner  selects a hypothesis  with the maximal value of the criterion. 

The explanation   criterion for this learner has $n$ badness rules as well as a functional $\psi$ to aggregate values of all $n$ badness criteria. 

The $i$ badness rule for a hypothesis $c$ is
$$\pi_i( \alpha,  \beta) = (\bm{s}(\alpha) = \asymp) \& (\bm{s}(\beta) = \approx) \& (\bm{x}_i(\alpha)  = \bm{x}_i(\beta)).$$
$$t(r_1, r_2) = r_2$$
$$ \delta_i(\alpha, \beta) = \rho_y(\alpha, \beta).$$
Aggregation operation in each badness rule is recursive aggregation averaging, $L_1(A)$ from the Theorem \ref{AggregationsTheorem}. 

A combining  operation to combine the results of all  badness rules is 
$$\psi(A) = 1 - \prod_i(1 - a_i).$$

It is easy to see that the operation $\Psi$  is both monotone and order insensitive (see \ref{AggAxioms}) . 

Thus, the loss criterion of Naive Bayes is an explanation criterion. 

Now the procedure of the learner with given data point $z = \langle z_1, \ldots, z_n \rangle$  may be described by the rule 

\begin{tcolorbox}[title = Naive Bayes, fonttitle=\fon{pbk}\bfseries]

\begin{itemize}
	\item \textbf{Fitting}: generating hypotheses $c \in \{0, 1\}$ and calculating their loss criterion $\Delta(c, S)$
				\item \textbf{Optimal selection} Select a hypothesis with minimal  $\Delta(c, S).$				
   \end{itemize}
\end{tcolorbox}

On the Fitting step,  the procedure calculates $n$ badness values (corresponding $n$ badness rules)  for each hypotheses, then aggregates  these values into the abduction criterion for a given hypothesis.  Interesting that calculation of each individual badness on this way  requires focusing.  However, as always, a process involved in calculation of a criterion is not reflected in the scheme of the learner. 

This proves that Naive Bayes Naive Bayes is an abduction learner. 

A product in the aggregation of the badness values  is chosen  in Naive Bayes   because  it is  sensitive to the low frequencies of a class: if some  value $1 - e(c, S_j)$ is  close to 0, the product will be affected much more than the sum of the frequencies, for example. If some feature value almost never happens in a given class $c$, the hypothesis $c$ will have no chance of being selected, regardless of other feature values of $z$.  It justifies choice of product for aggregation.

The loss criterion of Naive Bayes  is  traditionally interpreted as evaluation of posterior probabilities with ``naive''  assumption that the features  are independent. There are several issues with this narrative. 

First, it works for only one learner: if  NB  learner is based on  naive idea about Bayes rule, other learners would need different foundations.  

Another issue is that  it creates an impression that the learner needs  an improvement,  is not sophisticated enough. 

I hope, I demonstrated that interpretation of the learner as   ``naive'' and ``Bayesian'' misses the point. The procedure   is driven by its specific data type, and it is explained as performing AGC inference, the same as majority of other learners. 

\subsection{Logistic Regression}

This learner assumes the features are continuous, the feedback of the observations is binary, but the feedback of the decision is continuous: so, it is required some rounding up in each data point. The decision is defined on the domain  $\chi. $ The procedure of generating the hypotheses is not specified. 

The class of functions associated with logistic regression is 
$$F =\left \{ \frac{ 1} {1 +  exp(- \langle w, x\rangle)  } \right \}.$$
The functions have values in the interval $(0, 1).$

The learner minimizes criterion

$$\Delta(f, S) = \frac{1}{m} \sum_{s \in S} \log\Big( |y(s) - f\big(x(s) \big)|  \Big).$$

The explanation criterion  contains only one badness rule. 
$$\pi(\alpha, \beta) = \pi_{pw},$$ 
see \ref{PWR}.
The contradiction degree  is
$$\delta(\alpha, \beta) = log( \rho_y(\alpha,\beta)).$$
The function is isotone by $\rho_y(\alpha_1, \alpha_2)$ and does not depend on $\rho_x(\alpha_1, \alpha_2).$ 

The proper   aggregation  operation  is averaging ($L_1(A)$). 
It is obvious that this badness rules defines the criterion $\Delta(f, S).$

So, the logistic regression supports the main conjecture as well. 

The issue with this learner is that, the same as ERM, it does not take into account the similarity of feedback in close (not identical) data points,  and, therefore, has a tendency of overfitting.

\subsection{Linear SVM for classification}

All the previous learners belong to  machine learning ``folklore''. Their authors are not known, or, at least,  not  famous. 

SVM is one of the first learners associated with a known author: it  is  invented by  V. Vapnik. His earliest English publications  on this subject appeared in early nineties \cite{Vapnik1}, \cite{Vapnik2}.

Let us start with linear SVM for binary classification. 
The observations $$S = \{\beta_i, i= 1:m\}$$  have two class labels: $\{-1, 1\}$  with data points $x \in \bb{R}^n. $

The class of hypotheses $F$ consists of  linear functions $f(x) $ with $n$ variables.  For a $f \in F,  f(x) = x^T \beta  + \beta_0.$ denote $\bm{w}(f) = \beta, \bm{b}(f) = \beta_0.$

The problem is formulated as minimization of the criterion
\begin{tcolorbox}[title = Linear SVM, fonttitle=\fon{pbk}\bfseries]
\begin{align} \label{SVM}
	L(f, S, \xi)  =   \alpha \, \| \bm{w}(f)  \|^2 + \frac{1}{m} \sum_{\beta \in S }^m \xi(\beta) \\
 \text{s.t. }  \text{ for all }  \beta \in S,  \; \; \bm{y}(\beta) \cdot f(\bm{x}(\beta)) \ge 1 - \xi(\beta)\; \text{ and } \; \xi(\beta)  \ge 0. \label{conditions} 
\end{align}
\end{tcolorbox}

The criterion looks intimidating, but it may be simplified though. For this, we want to switch to narrower class of functions, which shall contain all the same decisions. 

The observations $\beta \in S$ satisfying condition $\bm{y}(\beta) \cdot f(\bm{x}(\beta)) > 0. $  are considered correctly classified by the function $f$. Denote $S^\oplus(f)$ all correctly classified observations by the function $f, $ and $S^\ominus(f) = S \setminus S^\oplus(f)$ the rest of the observations. 

Let us consider all the functions $f \in F$ such that 
$ S^\oplus(f) \neq \emptyset $ and 
$$\min_{S^\oplus(f) }| f(\bm{x}(\beta))| = 1.$$
Denote this class of function $F^\prime(S).$  The class $F^\prime(S)$ is not empty. Indeed, if for some $f, f  \not \equiv 0,$ $S^\oplus(f) = \emptyset$, then, $S^\oplus( -f) = S$.  If  $$q = \min_{S^\oplus(f) }| f(\bm{x}(\beta))| \neq  1, $$ then 
the function $f^\prime = \frac{1}{q} f$ satisfies the condition $$\min_{S^\oplus(f) }| f^\prime(\bm{x}(\beta))| = 1.$$

The last consideration implies that if $f$ is the decision of the problem, 
then the problem has a decision  $f^\prime$ in the class $F^\prime(S)$ with the same set of correctly recognized observations $S^\oplus(f^\prime) = S^\oplus(f).$

Therefore, we can restrict the search for a decision in the class $F^\prime(S)$ only.

\begin{theorem}
	The linear  SVM classification problem minimizes the loss criterion
	$$L_{svm}(f, S) = \alpha \| \bm{w}(f) \|^2 + \frac{1}{m} \sum_{\beta \in S^\ominus(f)} |\bm{y}(\beta) - 	f(\bm{x}(\beta)) |,$$
for $f \in F^\prime(S). $
\end{theorem}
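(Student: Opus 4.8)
The plan is to reduce the constrained problem (\ref{SVM})--(\ref{conditions}) to the stated criterion in two moves: first eliminate the slack variables by partial minimization, then exploit the normalization that defines $F^\prime(S)$.

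First I would fix $f$ and minimize the objective over the slacks $\xi$ alone. For each observation $\beta$ the only constraints involving $\xi(\beta)$ are $\xi(\beta) \ge 0$ and $\xi(\beta) \ge 1 - \bm{y}(\beta) f(\bm{x}(\beta))$, and the objective is increasing in each $\xi(\beta)$; hence the optimal choice is the hinge value $\xi^\ast(\beta) = \max\big(0,\, 1 - \bm{y}(\beta) f(\bm{x}(\beta))\big)$. Substituting this back turns the inner problem into minimizing $\alpha \|\bm{w}(f)\|^2 + \frac{1}{m} \sum_{\beta \in S} \max\big(0,\, 1 - \bm{y}(\beta) f(\bm{x}(\beta))\big)$ over $f$, so the whole task is now to identify this reduced objective with $L_{svm}$ on $F^\prime(S)$.

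Second, I would restrict to $f \in F^\prime(S)$ and split the hinge sum across $S^\oplus(f)$ and $S^\ominus(f)$. For a correctly classified $\beta \in S^\oplus(f)$ the sign of $f(\bm{x}(\beta))$ matches $\bm{y}(\beta)$, so $\bm{y}(\beta) f(\bm{x}(\beta)) = |f(\bm{x}(\beta))|$; the defining property $\min_{S^\oplus(f)} |f(\bm{x}(\beta))| = 1$ then forces $\bm{y}(\beta) f(\bm{x}(\beta)) \ge 1$, making every such hinge term vanish, so the correctly classified block contributes nothing. For $\beta \in S^\ominus(f)$ we have $\bm{y}(\beta) f(\bm{x}(\beta)) \le 0$, so the hinge term equals $1 - \bm{y}(\beta) f(\bm{x}(\beta))$; a two-line case check on $\bm{y}(\beta) = \pm 1$ (using $f(\bm{x}(\beta)) \le 0$ when $\bm{y}(\beta) = 1$ and $f(\bm{x}(\beta)) \ge 0$ when $\bm{y}(\beta) = -1$) identifies this quantity with $|\bm{y}(\beta) - f(\bm{x}(\beta))|$. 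Summing, the reduced objective on $F^\prime(S)$ becomes exactly $L_{svm}(f, S)$.

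The final step is to invoke the reduction already established before the theorem --- that any optimal $f$ admits a rescaled representative in $F^\prime(S)$ with the same set $S^\oplus$ --- so that minimizing the original criterion over all admissible $f$ coincides with minimizing $L_{svm}$ over $F^\prime(S)$. I expect the main obstacle to be the misclassified block: one must treat the boundary case $\bm{y}(\beta) f(\bm{x}(\beta)) = 0$ carefully, and, more importantly, confirm that the normalization of $F^\prime(S)$ --- which constrains only the \emph{minimal} correctly classified margin --- genuinely annihilates \emph{every} correctly classified hinge term, since it is the equality $\min_{S^\oplus(f)}|f(\bm{x}(\beta))| = 1$ forcing all correctly classified margins to be at least $1$ that makes the correctly classified contribution disappear and leaves only the sum over $S^\ominus(f)$.
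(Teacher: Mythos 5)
Your proposal is correct and follows essentially the same route as the paper's proof: eliminate the slacks by independent per-observation minimization to the hinge value, then use the $F^\prime(S)$ normalization to kill the correctly classified terms and a $\bm{y}(\beta)=\pm1$ case check to rewrite each misclassified hinge term as $|\bm{y}(\beta)-f(\bm{x}(\beta))|$. The only differences are presentational (you assemble the hinge objective first and then split over $S^\oplus(f)$ and $S^\ominus(f)$, while the paper computes the minimal slack separately in each case), and your attention to the boundary case $\bm{y}(\beta)f(\bm{x}(\beta))=0$ is a minor refinement the paper glosses over.
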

\begin{proof}

 The conditions (\ref{conditions})  can be rewritten as $\forall \beta, \beta \in S:$
 \begin{equation}\label{cond}
 	\begin{cases}
 		\xi(\beta) \geq  1  - \bm{y}(\beta) \cdot f(\bm{x}(\beta))   \\
 		\xi(\beta) \geq 0.
 	\end{cases}
 \end{equation}
 or  
 $$\xi(\beta) \ge \max \big\{  1 - \bm{y}(\beta) \cdot f(\bm{x}(\beta)) , \; 0 \big\}.$$

  The values $\xi(\beta), \beta \in S$  do not depend on each other, so the minimum of their sum  is achieved when every variable $\xi(\beta)$ equals its lowest possible value. Let us find these lowest values for $\xi(\beta)$ depending on  if $\beta \in  S^\oplus(f)$ or  $\beta \in S^\ominus(f).$
 
 If $\beta \in S^\oplus(f),$ 
 $$\bm{y}(\beta) \cdot f(\bm{x}(\beta)) = |f(x(s))|. $$ 
 
 By definition of $F^\prime(S), $ $|f(x(s))| \geq 1.$
 Then $$\xi(\beta) \geq \max \big\{  1 - \bm{y}(\beta)\cdot f(\bm{x}(\beta)) , \; 0 \big\} = 0.$$ 
 In this case, the lowest possible value for $\xi(\beta)$ is 0.

If $\beta \in S^\ominus (f),$ $$\bm{y}(\beta) \cdot f(\bm{x}(\beta)) = - |f(\bm{x}(\beta))|.$$ Then 
$$\xi(\beta) \geq \max \big\{  1 - \bm{y}(\beta)\cdot f(\bm{x}(\beta)) , \; 0 \big\} = 1 + |f(\bm{x}(\beta))|.$$
In this case, the lowest possible value for $\xi(\beta)$ is $1 + |f(\bm{x}(\beta))|.$

So,

\begin{equation}
\min_\xi \frac{1}{m} \sum_S \xi(\beta) = \sum_{\beta \in S^\ominus(f)} ( 1 + |f(\bm{x}(\beta))| ).
\end{equation}

We still need to prove that for $\beta \in S^\ominus(f)$
$$1 + |f(\bm{x}(\beta))|  =  |\bm{y}(\beta) - f(\bm{x}(\beta))|. $$

Let us take $\beta \in S^\ominus(f).$ 
If $\bm{y}(\beta) = 1, $ then $f(\bm{x}(\beta)) < 0$ and $|f(\bm{x}(\beta))| = - f(\bm{x}(\beta)).$ So, 
$$(1 + |f(\bm{x}(\beta))| ) = 1 - f(\bm{x}(\beta)) = | \bm{y}(\beta) - f(\bm{x}(\beta)) |.$$
If $\bm{y}(\beta) = -1, $ then $f(\bm{x}(\beta)) > 0$ and $|f(\bm{x}(\beta))| = f(\bm{x}(\beta)).$ So,
$$1 + |f(\bm{x}(\beta))| = 1 + f(\bm{x}(\beta)) = -\bm{y}(\beta) + f(\bm{x}(\beta)) = |\bm{y}(\beta) - f(\bm{x}(\beta))|.$$

\end{proof}

Now  I need to show that to prove that $L_{svm}(f, S)$ is an explanation criterion.


The distance between the feedback of observations and the function value is defined  here by the rule

\[\rho_y(\alpha_1, \alpha_2) = 
\begin{cases}
	0, & \text{ if } \bm{y}(\alpha_1) \cdot \bm{y}(\alpha_2) \ge 0\\
	|\bm{y}(\alpha_1) -\bm{y}(\alpha_2)|, & \text{otherwise}. 
\end{cases}
\]

The only badness rule  coincides with point-wise rule $T_{pw}$. 

The formula  $Z(f) = \|w(f)\|^2$ of the criterion  is a regularization component: $w(f)$ is the gradient of the hypothesis $f$, and  $\|w(f)\|^2$ is the square of its norm. 
The functional $\Psi$ which combines  values of these two criteria is $\Psi(x_1, x_2) = \alpha x_1 + x_2.$

This confirms that linear SVM for classification minimizes an explanation criterion.

\subsection{Linear Support vector regression}

The learner ( \cite{Hastie}) minimizes criterion 

$$L_{svr}(f, S) = \sum_{i = 1}^m V_\epsilon\big(\bm{y}(\beta_i) - f(\bm{x}(\beta_i))\big) + \lambda \|\bm{w}(f)\|^2, $$
where 
\[
V_\epsilon(r) = 
\begin{cases}
	0, & \text{if  } |r| < \epsilon\\
	|r| - \epsilon, & \text{otherwise}
\end{cases}
\]
and $S = \{\beta_1, \ldots, \beta_m\}.$

The class of hypothesis  is the class of all linear functions. 

The second component of the loss criterion is regularization, the same as in the SVM.  The distance between feedback of an observation and the value of a hypothesis is defined through the function $V:$
for $\alpha_1 = (\asymp(\ph(x_1) = y_1) ), \alpha_2 = (\approx( \ph(x_2) = y_2))$
$$\rho_y(\alpha_1, \alpha_2) = V(y_1  -  y_2).$$

Then the AGC criterion scheme here  coincides with the scheme for  linear SVM for classification.  

So, the linear support vector regression supports the main conjecture as well.

\subsection{Support Vector Regression with Kernels}

The learner is defined as in  (\cite{Hastie}). Suppose we have a set of basis functions $H= \{h_i(x), i = 1,\ldots, k\}.$
We are looking for a hypotheses 
$$f(x) = \sum_{i=1}^k w_i h_i(x)  + b.$$
The loss criterion  used here is
$$L(f, S) = \sum_{i= 1}^m V\big( \bm{y}(\beta_i) - f(\bm{x}(\beta_i) \big)  + \lambda \|\bm{w}(f)\|^2,$$
where 
\[
V(r) = 
\begin{cases}
	0, & \text{if  } |r| < \epsilon \\
	|r| - \epsilon, & \text{otherwise}.
\end{cases}
\]

Here the transformation  $x \rightarrow \langle h_1(x), \ldots, h_k(x)\rangle$ from a $n$-dimensional space $R^n$ into $k$-dimensional space $H(x)$ may be called focusing.
Then the problem is reduced to solving a linear SVM regression in the transformed space. 
Thus, SVR with kernel supports the main conjecture as well.

\subsection{Ridge Regression} 

The learner  finds the solution in the class of all linear hyperplanes $F = \{f: \; \, f= wx + b\}, $ and it has the criterion

$$ L_{rr}(f, S) =  \alpha \|\bm{w}(f)\|^2 + \frac{1}{m} \sum_{\beta \in S}  (f(\bm{x}(\beta)) 	- \bm{y}(\beta))^2.$$

The first component of the loss criterion is regularization component, the same as in SVM, SVR. 

Unlike SVR, in this case the distance on $Y$ is $\| y_1 - y_2\| = (y_1 - y_2)^2.$ While SVR does not count small errors, Ridge regression counts all errors, but small errors have low influence because of the square in the norm. 

The explanation criterion  for Ridge Regression is not any different from the criterion for SVR. The only difference between the learners is in the interpretation of Data Explanation Logic: the distances on $Y$ are different. 

Thus,  Ridge regression corroborates the main conjecture too.

\subsection{Neural Network  (NN) }

Let us consider single hidden layer NN for two class classification as it is described in \cite{Hastie}.

First, the learner transforms  $n-$ dimensional metric space of inputs $\bb{R}$ into $k$-dimensional space $\bb{Z}$ using non-linear transformation;
$$Z_i(x) = \delta( g_i(x) ), i = 1, \ldots, k,$$
where $\delta(r)$ is delta function and $g_i$ are linear functions.  Denote $\bm{z}(x)$ the vector with coordinates $\langle Z_1(x), \ldots, Z_k(x) \rangle.$

Then, for each class $c \in \{0, 1\}$, the learner builds  linear voting function  $f_c(\bm{z}(x)).$ 
Denote $G = \{g_1, \ldots, g_k\}$, and $F= \{f_0, f_1\}.$ 

For each $x \in \bb{R}^n$ the class is selected as
$C(x,  G, F) =  \arg \max_c  f_c(\bm{z}(x)).$ 

The learner uses the  loss criterion 
$$L_{nn}(G, F, S) =  \sum_{\beta \in S} ( \bm{y}(\beta) - C(\bm{x}(\beta), G, F) ).$$
It is obvious that the loss criterion is a point-wise badness rule $T_{pw}.$

The learner optimizes simultaneously  parameters of the functions $G$ and $F.$ For selection of parameters of these functions the learner uses gradient descent, which is called ``back propagation'' in this case. The learner uses some additional stopping criterion.  

So, the procedure does not have a focusing stage. If calculates loss for a given set of parameters,  evaluates  gradients by  each parameter, and then updates parameters based on the gradients. After the stopping criterion is achieved, the algorithm outputs the decision with the lowest loss criterion. 

The procedure has only two types of steps: 
\begin{enumerate}
\item fitting, which includes
\begin{itemize} 
\item  generation of the $C(x, G, F)$ hypothesis based on previous value of loss criterion and gradients 
\item evaluation of loss criterion $L_{nn}(G, F, S).$ of the current hypothesis 
\end{itemize}
\item optimal selection: selection of the hypothesis with the lowest loss criterion.  
\end{enumerate}

Thus, NN is also an abduction learner.

\subsection{$K$ Means Clustering}

The learner is different from hierarchical clustering in that it does not combine clusters, rather, for each observation, it chooses the proper cluster. It is assumed that the distance on the domain of  data points is Euclidean. 

Here is the description of the learner from \cite{Hastie}. 
	\begin{enumerate}
		\item Given the current set of means of clusters $M = \{m_1, \ldots, m_K\}$, each observation is assigned to the cluster with the closest mean.
		\item The rounds of assignment of all observations are repeated until clusters do not change.
	\end{enumerate}

The  learning happens when we search for the cluster for the given observed data point.  Denote $C(x)$ the assignment of a cluster to a data point $x$. Given the set of observed  data points $S_x = \{x_1, \ldots, x_m\},$  $K$ clusters  with cluster centers $M$ of the sizes $\{l_1, \ldots, l_K\}$
the procedure assigns a new class to an observed data point  to  minimize sum of all pairwise distances within each cluster
\begin{align}
W(C, S) & = \frac{1}{2} \sum_{k=1}^K \sum_{C(\xi) = k} \sum_{C(\zeta) = k} \| \xi - \zeta\|^2 \label{1line}\\ 
        & = \sum_{k=1}^K l_k \sum_{C(\xi) = k}  \| \xi - \overline{x}_k\|^2,
\end{align}
where $\xi, \zeta \in S$, and $\overline{x}_k$ is mean of the $k$-th cluster. 
I use the form (\ref{1line}) to prove that the learner agrees with the main conjecture. 

Denote  $x_0$ a data point, $x_0 \in S_x,$ which we need to assign a cluster on this step.

As in the case of hierarchical clustering,  we consider underlying dependence $\ph$ as a function from cluster index $k$ to the observed data point $x.$  There are $K$ hypotheses $H(x_0) = \{h_1, \ldots, h_K\}.$ Each hypothesis $h_i$ has a single hypothetical case $\psi( x_0, i , \asymp).$ 

 We assume, before current run of the learner,  the clusters are already assigned to each observed data point besides $x_0$. So, the run starts with the training set having observations $$S(x_0) = \{\beta \in S: \bm{x}(\beta) \neq x_0\}.$$

Let us define an explanation criterion equivalent with \ref{1line}. There is only one badness rule  with the alignment relation 
$$\pi(\alpha_1, \alpha_2) = ( \bm{x}(\alpha_1) = \bm{x}(\alpha_2)),$$
which says that we evaluate contradiction degree  for each pair of formulas with the same cluster, regardless if it is an observation or a hypothetical instance. The alignment relation  is symmetrical, therefore for each pair of formulas $\alpha_1, \alpha_2$  which satisfies the condition, the pair $\alpha_2, \alpha_1$ satisfies the condition as well. In effect, every pair is counted twice.  

The deviation  function is
$$t(r_1, r_2) = \frac{1}{2} r_2^2$$ 
and the deviation is calculated by formula 
$$\delta(\alpha_1, \alpha_2) =  t(\rho_x(\alpha_1, \alpha_2), \rho_y(\alpha_1, \alpha_2) =  \frac{1}{2}\rho_y(\alpha_1, \alpha_2)^2 $$

For the recursive  aggregation we use the averaging $L_1(A)$ again. 

The learner generates all hypotheses $H(x_0)$, evaluates the explanation criterion for each of them and selects the hypothesis with the lowest value of the  criterion. This is Basic training. Thus this learner corroborates the main conjecture as well.

\section{Conclusions}

Peirce considered  learning from experimental data as  ``data explanation''.  Pragmatism,  which he introduced,   informally  describes the logic behind this process   as abduction inference.    ML is an automated learning from data for practical applications.   If Peirce is right, ML could be understood  as an automated  abduction inference. And  indeed I demonstrated here that it is the case. 

The  criterion of data explanation  is formalized here within Data Analysis Logic I defined. No statistical concepts were used in this formalization. 
The process of the criterion minimization is described as two standard procedures: Basic training and Wrapper. Combination of an explanation criterion and the procedure of its minimization makes an abduction learner. My conjecture here is that every ML learners is an abduction learner. This conjecture is corroborated on 13 most popular learners for classification, clustering and regression. Thus, ML may be called an automated abduction indeed. 

The approach has important advantages over commonly accepted statistical learning paradigm. It  allows to understand

\begin{enumerate}
	\item real life learners in their variety; their differences and common features;
	\item conditions for learning form finite data sets
	\item common structure of  classification, regression and clustering problems and their algorithms
	\item  regularization as an aspect of an explanation criterion. 
\end{enumerate}

The future development of this approach may include
\begin{itemize}
	\item Logical understanding of testing as necessary part of learning process.
	\item  Logical approach toward data analysis beyond learning
	\item Automated algorithm selection. 
	\item Understanding  of other learning tasks, such as survival analysis, ranking  \cite{BipartiteSapir},   optimal choice \cite{OptChoice}.
    \item Development of new algorithms based on the proposed ideas. One example of such novel algorithm, adaptive $k$-NN,  is introduced here. 
    \end{itemize}

\bibliographystyle{plain} 
\bibliography{Smoothing} 

\end{document}